\newcommand\E{\mathbb{E}}
\newcommand\R{\mathbb{R}}
\newcommand\X{\mathcal{X}}
\newcommand\Y{\mathcal{Y}}
\newcommand\F{\mathcal{F}}
\newcommand\bW{{\mathbf{W}}}
\newcommand\bd{{\mathbf{d}}}
\newcommand\bu{{\mathbf{u}}}
\newcommand\bx{{\mathbf{x}}}
\DeclarePairedDelimiter\ip{\langle}{\rangle}
\DeclarePairedDelimiter\norm{\|}{\|}
\DeclarePairedDelimiter\del{\lparen}{\rparen}
\DeclarePairedDelimiter\sbr{\lbrack}{\rbrack}
\DeclarePairedDelimiter\cbr{\{}{\}}
\DeclareMathOperator{\poly}{poly}
\newtheorem{theorem}{Theorem}
\newtheorem{fact}{Fact}
\title{\textbf{Dimension lower bounds for linear approaches to function approximation}}
\author{Daniel Hsu}
\date{}
\begin{document}

\maketitle

\begin{abstract}
  This short note presents a linear algebraic approach to proving dimension lower bounds for linear methods that solve $L^2$ function approximation problems.
  The basic argument has appeared in the literature before (e.g., Barron, 1993) for establishing lower bounds on Kolmogorov $n$-widths.
  The argument is applied to give sample size lower bounds for kernel methods.
\end{abstract}

\section{Introduction}

Function approximation is an important problem in many areas, and it is increasingly important for the methods of approximation to be computationally tractable when they are to be used in applications.
Linearity is a property that has both enabled the development of efficient algorithms for approximation, as well as the tractable mathematical analyses of such "linear methods" (defined below).
However, it has also been recognized that linear methods may be severely limited for solving certain approximation problems.
The purpose of this note is to demonstrate such limitations through a simple dimension argument.

We are primarily concerned with $L^2$ approximation of functions.
Let $\X$ be a domain (typically a subset of $\R^d$), $P$ be a probability distribution on $\X$, and $L^2(P)$ be the space of real-valued functions on $\X$ that are square-integrable with respect to $P$.
For any class of functions $\F \subseteq L^2(P)$, a \emph{linear method} for approximating functions from $\F$ is one that commits to choosing the approximation from a subspace $W \subseteq L^2(P)$ before getting any information about the target function from $\F$.
This is the setup behind the concept of Kolmogorov $n$-widths, and the argument given in this note is largely based on a lower bound by \citet[Lemma 6]{barron93universal} for the Kolmogorov $n$-width of a certain class of functions.
We do not know the lineage of this argument, but it has recurred in the literature several times in related contexts~\citep[e.g.,][]{blum1994weakly,kamath2020approximate,daniely2020learning,hsu2021approximation}.
We present a result from \citet{hsu2021approximation} in a slightly more general form to establish a lower bound on the dimension of the subspace used by any linear method that is able to achieve small approximation error with respect to $\F$.
The bound is given in terms of the number of near-orthogonal functions contained in $\F$.

We use the dimension lower bound to give a sample size lower bound for kernel methods~\citep{scholkopf2002learning}.
There are many lower bounds for kernel methods in the literature~\citep[e.g.,][]{ben2002limitations,warmuth2005leaving,khardon2005maximum,wei2019regularization,kamath2020approximate,allen2020backward}.
Our goal is to simply show how such a lower bound follows easily from the dimension lower bound, and also to point out an aspect of kernel methods as they relate to learning with non-adaptive membership queries.

\section{The dimension lower bound}

The following theorem is from \citet[Theorem 29]{hsu2021approximation} in a slightly more specialized form.
(Also see \citealp[Theorem 19]{kamath2020approximate} for a very similar theorem.)

\begin{theorem}
  \label{thm:dimension}
  Let $H$ denote a Hilbert space with inner product denoted by $\ip{\cdot,\cdot}_H$ and norm denoted by $\norm{\cdot}_H$.
  Fix any $\varphi_1,\dotsc,\varphi_N \in H$ with $\|\varphi_i\|_H^2 = 1$ for all $i=1,\dotsc,N$.
  Let $\bW$ be a finite-dimensional subspace of $H$ (and $\bW$ is allowed to be random) with $r := \E[ \dim(\bW) ] < +\infty$.
  Define
  \begin{align*}
    \epsilon & := \frac1N \sum_{i=1}^N \E\sbr*{ \inf_{g \in \bW} \norm{ g - \varphi_i }_H^2 } .
  \end{align*}
  Then
  \begin{align*}
    r & \geq N \cdot \frac{1-\epsilon}{1 + \sqrt{\sum_{i \neq j} \ip*{\varphi_i,\varphi_j}_H^2}} .
  \end{align*}
  Equality holds when the $\varphi_i$ form an orthonormal basis for $H$.
\end{theorem}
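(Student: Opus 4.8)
The plan is to replace the infimum over $\bW$ by an orthogonal projection, rewrite $N(1-\epsilon)$ as a sum of squared projection lengths, and then bound that sum by $\E[\dim(\bW)]$ times the top eigenvalue of the Gram matrix of $\varphi_1,\dotsc,\varphi_N$. First, fix a deterministic finite-dimensional subspace $W\subseteq H$ and let $P_W$ be the orthogonal projection onto $W$. Since $\norm{\varphi_i}_H=1$, Pythagoras gives $\inf_{g\in W}\norm{g-\varphi_i}_H^2=\norm{\varphi_i-P_W\varphi_i}_H^2=1-\norm{P_W\varphi_i}_H^2$. Summing over $i$, taking expectations over $\bW$ (a finite sum, so the interchange with $\E$ is harmless), and using the definition of $\epsilon$, one gets $\sum_{i=1}^N\E\sbr{\norm{P_\bW\varphi_i}_H^2}=N(1-\epsilon)$. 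So it suffices to prove that for every deterministic finite-dimensional $W$,
\[
\sum_{i=1}^N\norm{P_W\varphi_i}_H^2\ \le\ \dim(W)\cdot\del*{1+\sqrt{\sum_{i \neq j}\ip{\varphi_i,\varphi_j}_H^2}},
\]
since taking expectations of both sides and rearranging then yields the theorem (the bound being vacuous, though still valid, when $\epsilon>1$).

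For the displayed inequality, introduce the positive semidefinite finite-rank operator $A$ on $H$ defined by $Au=\sum_{i=1}^N\ip{\varphi_i,u}_H\varphi_i$. Picking an orthonormal basis $e_1,\dotsc,e_k$ of $W$ with $k=\dim(W)$ and writing $P_W\varphi_i=\sum_{l=1}^k\ip{e_l,\varphi_i}_H e_l$,
\[
\sum_{i=1}^N\norm{P_W\varphi_i}_H^2=\sum_{l=1}^k\sum_{i=1}^N\ip{\varphi_i,e_l}_H^2=\sum_{l=1}^k\ip{e_l,Ae_l}_H\ \le\ \dim(W)\cdot\lambda_{\max}(A),
\]
the last step because $A$ is positive semidefinite so $\ip{e_l,Ae_l}_H\le\lambda_{\max}(A)$ for each unit $e_l$. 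It remains to bound $\lambda_{\max}(A)$. For a unit vector $u\in H$, set $c_i=\ip{\varphi_i,u}_H$ and let $G$ be the $N\times N$ Gram matrix $G_{ij}=\ip{\varphi_i,\varphi_j}_H$; then $\sum_i c_i^2=\ip{\sum_i c_i\varphi_i,\,u}_H\le\norm{\sum_i c_i\varphi_i}_H=\sqrt{c^\top G c}\le\sqrt{\lambda_{\max}(G)\sum_i c_i^2}$, so $\ip{u,Au}_H=\sum_i c_i^2\le\lambda_{\max}(G)$ and hence $\lambda_{\max}(A)\le\lambda_{\max}(G)$. Finally, $G=I+E$ with $E$ real symmetric and zero on its diagonal, so $\lambda_{\max}(G)\le 1+\norm{E}\le 1+\norm{E}_F=1+\sqrt{\sum_{i\neq j}\ip{\varphi_i,\varphi_j}_H^2}$, using that a real symmetric matrix has operator norm at most its Frobenius norm. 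Chaining these three inequalities proves the displayed bound, and thus the theorem.

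For the equality assertion, suppose $\varphi_1,\dotsc,\varphi_N$ is an orthonormal basis of $H$. Then $G=I$, so the square root term vanishes, and $A$ is the identity operator on $H$, so $\sum_{i=1}^N\norm{P_W\varphi_i}_H^2=\sum_{l=1}^k\ip{e_l,e_l}_H=\dim(W)$ for every $W$; consequently $N(1-\epsilon)=r$ and the inequality of the theorem holds with equality.

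I do not expect a real obstacle. The only points requiring a little care are that the eigenvalue manipulations are legitimate in a possibly infinite-dimensional $H$ — which is fine since $A$ has finite rank and $W$ is finite-dimensional, so everything reduces to finite matrices — and the single lossy estimate $\norm{E}\le\norm{E}_F$, which is exactly what produces the stated $\sqrt{\sum_{i\neq j}\ip{\varphi_i,\varphi_j}_H^2}$ in place of a sharper spectral quantity.
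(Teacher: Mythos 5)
Your proof is correct, and its skeleton matches the paper's: both arguments reduce the approximation error to squared projections via the Hilbert projection theorem (so that $N(1-\epsilon)=\sum_{i=1}^N\E\sbr{\norm{\Pi_\bW\varphi_i}_H^2}$), expand the projection in an orthonormal basis of $\bW$, and then bound $\sum_{i=1}^N\ip{u,\varphi_i}_H^2$ uniformly over unit vectors $u$ by $1+\sqrt{\sum_{i\neq j}\ip{\varphi_i,\varphi_j}_H^2}$, which costs one such term per basis vector and hence a factor of $\E[\dim(\bW)]$. Where you genuinely differ is in how that uniform bound is obtained: the paper simply invokes the Boas--Bellman inequality (\Cref{fact:boas-bellman}), whereas you prove it from scratch by writing $\sum_i\ip{u,\varphi_i}_H^2=\ip{u,Au}_H$ for the frame operator $A$, passing to the Gram matrix via $\lambda_{\max}(A)\le\lambda_{\max}(G)$ (your Cauchy--Schwarz chain here is correct), and finally using $\lambda_{\max}(G)\le 1+\norm{G-I}\le 1+\norm{G-I}_F$. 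In effect you re-derive the unit-norm case of Boas--Bellman; note that your intermediate statement $r\ge N(1-\epsilon)/\lambda_{\max}(G)$ is slightly sharper, the Frobenius relaxation being the only loss---the same loss the paper accepts by quoting the fact. The paper's route buys brevity (one cited inequality, no auxiliary operators); yours buys a self-contained argument and a more transparent spectral interpretation, and your treatment of the random subspace (prove the deterministic inequality pointwise, then take expectations) and of the equality case ($G=I$, $A$ the identity, so $\sum_i\norm{\Pi_\bW\varphi_i}_H^2=\dim(\bW)$ exactly, i.e.\ Parseval) are both sound.
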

\begin{proof}
  Let $\bu_1,\dotsc,\bu_\bd$ be an orthonormal basis for $\bW$, with $\bd := \dim(\bW)$.
  Let $\Pi_\bW$ denote the orthogonal projection operator for $\bW$.
  Then
  \begin{align*}
    \epsilon
    & = \frac1N \sum_{i=1}^N \E\sbr*{ \inf_{g \in \bW} \norm{ g - \varphi_i }_H^2 }
    && \text{(definition)} \\
    & = \frac1N \sum_{i=1}^N \E\sbr*{ 1 - \norm{ \Pi_\bW\varphi_i }_H^2 }
    && \text{(Hilbert projection theorem)} \\
    & = 1 - \frac1N \E\sbr*{ \sum_{i=1}^N \sum_{k=1}^\bd \ip*{\bu_k,\varphi_i}_H^2 }
    && \text{(linearity of expectation)} \\
    & = 1 - \frac1N \E\sbr*{ \sum_{k=1}^\bd \sum_{i=1}^N \ip*{\bu_k,\varphi_i}_H^2 }
    && \text{(switching order of summations)} \\
    & \geq 1 - \frac1N \E\sbr*{ \sum_{k=1}^\bd \del*{ 1 + \sqrt{ \sum_{i \neq j} \ip*{\varphi_i,\varphi_j}_H^2 } } }
    && \text{(\Cref{fact:boas-bellman})} \\
    & = 1 - \frac{r}{N} \del*{1 + \sqrt{\sum_{i \neq j} \ip*{\varphi_i,\varphi_j}_H^2}}
    && \text{(linearity of expectation)} .
  \end{align*}
  By Parseval's identity, the inequality holds with equality when the $\varphi_i$ form an orthonormal basis for $H$.
\end{proof}

\begin{fact}[\citealp{boas41general,bellman44almost}]
  \label{fact:boas-bellman}
  For any $g, \varphi_1, \dotsc, \varphi_N$ in an inner product space,
  \begin{equation*}
    \sum_{i=1}^N \ip{g, \varphi_i}^2
    \leq \ip{g,g}^2 \del*{ \max_{1 \leq i \leq N} \ip{\varphi_i,\varphi_i}^2 + \sqrt{ \sum_{i \neq j} \ip*{\varphi_i,\varphi_j}^2 }} .
  \end{equation*}
\end{fact}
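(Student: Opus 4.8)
The plan is to prove \Cref{fact:boas-bellman} by the classical "square-and-split" argument. Write $c_i := \ip{g,\varphi_i}$ for $i = 1,\dotsc,N$, so that the quantity to be bounded is $S := \sum_{i=1}^N c_i^2$. If $S = 0$ there is nothing to prove, so assume $S > 0$. The first step is to rewrite $S$ as a single inner product, $S = \ip*{g, \sum_{i=1}^N c_i \varphi_i}$, and apply the Cauchy--Schwarz inequality to obtain $S^2 \leq \ip{g,g} \cdot \norm*{\sum_{i=1}^N c_i \varphi_i}^2$. This passage to $S^2$ is the crucial move: it trades the linear dependence on the $c_i$ for a quadratic one, which is what lets the cross terms be absorbed in the next step.

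Next I would expand $\norm*{\sum_i c_i \varphi_i}^2 = \sum_{i,j} c_i c_j \ip{\varphi_i,\varphi_j}$ and split it into the diagonal part $\sum_i c_i^2 \ip{\varphi_i,\varphi_i}$ and the off-diagonal part $\sum_{i \neq j} c_i c_j \ip{\varphi_i,\varphi_j}$. The diagonal part is bounded by $\del*{\max_{1 \leq i \leq N} \ip{\varphi_i,\varphi_i}} \cdot S$ directly. For the off-diagonal part, apply Cauchy--Schwarz over the index set $\cbr{(i,j) : i \neq j}$ to bound it by $\sqrt{\sum_{i \neq j} c_i^2 c_j^2} \cdot \sqrt{\sum_{i \neq j} \ip{\varphi_i,\varphi_j}^2}$, and then use $\sum_{i \neq j} c_i^2 c_j^2 \leq \del*{\sum_i c_i^2}^2 = S^2$ to replace the first factor by $S$. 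Combining the two bounds gives $S^2 \leq \ip{g,g} \cdot S \cdot \del*{\max_{1 \leq i \leq N} \ip{\varphi_i,\varphi_i} + \sqrt{\sum_{i \neq j} \ip{\varphi_i,\varphi_j}^2}}$, and dividing through by $S > 0$ yields the inequality. (This is the form actually invoked in the proof of \Cref{thm:dimension}, where $\norm{\bu_k}_H = \norm{\varphi_i}_H = 1$, so that $\ip{g,g}$ and $\max_i \ip{\varphi_i,\varphi_i}$ both equal $1$.)

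I do not anticipate a genuine obstacle. The points that need care are the two applications of Cauchy--Schwarz --- once pairing $g$ against $\sum_i c_i \varphi_i$, so that the left-hand side becomes the square $S^2$, and once pairing the pair-indexed families $(c_i c_j)_{i \neq j}$ and $(\ip{\varphi_i,\varphi_j})_{i \neq j}$ --- and keeping the diagonal and off-diagonal terms separated, so that the resulting constant matches the one in the statement (with the diagonal handled through the maximum rather than folded into the square root). No nondegeneracy assumptions on $g$ or the $\varphi_i$ are required, and working with real scalars suffices.
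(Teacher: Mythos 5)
Your argument is correct, and it is the standard proof of the Boas--Bellman inequality: writing $c_i := \ip{g,\varphi_i}$ and $S := \sum_i c_i^2$, applying Cauchy--Schwarz to $S = \ip*{g,\sum_i c_i\varphi_i}$, expanding $\norm*{\sum_i c_i\varphi_i}^2$ into diagonal and off-diagonal parts, bounding the diagonal by $\del*{\max_i\ip{\varphi_i,\varphi_i}}S$, and bounding the off-diagonal by a second Cauchy--Schwarz over the pairs $(i,j)$, $i\neq j$, together with $\sum_{i\neq j}c_i^2c_j^2 \le S^2$. The paper itself gives no proof of \Cref{fact:boas-bellman} --- it simply cites Boas (1941) and Bellman (1944) --- so there is no in-paper argument to compare against; yours is the classical one. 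One point you should state explicitly: what your argument establishes is $\sum_i \ip{g,\varphi_i}^2 \le \ip{g,g}\del*{\max_{1\le i\le N}\ip{\varphi_i,\varphi_i} + \sqrt{\sum_{i\neq j}\ip*{\varphi_i,\varphi_j}^2}}$, with $\ip{g,g}=\norm{g}^2$ and $\max_i\ip{\varphi_i,\varphi_i}=\max_i\norm{\varphi_i}^2$ appearing to the first power, whereas the statement as printed squares both of these quantities. The printed form cannot be literally correct for arbitrary vectors (take $N=1$ and $\varphi_1=g$ with $0<\norm{g}<1$: the left side is $\norm{g}^4$ while the right side is $\norm{g}^8$), so the extra squares are evidently a typo; your version is the genuine Boas--Bellman inequality, and, as you already observe, the discrepancy is immaterial in the only place the fact is invoked (the proof of \Cref{thm:dimension}), where $\norm{g}_H = \norm{\varphi_i}_H = 1$ makes all of these factors equal to $1$.
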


\section{Lower bounds for kernel methods}

We can use \Cref{thm:dimension} to give a sample size lower bound for kernel methods.
A kernel method based on $n$ training examples $(x_1,y_1),\dotsc,(x_n,y_n) \in \X \times \Y$ returns a function of the form
\begin{align*}
  x & \mapsto \sum_{i=1}^n \alpha_i K(x,x_i)
\end{align*}
for some $\alpha_1,\dotsc,\alpha_n \in \R$ (which may depend on the training examples).
Here $K$ is a positive definite kernel function on the input space $\X$, and $\Y$ is the output space (e.g., $\{-1,1\}$).
The subspace of such functions has dimension at most $n$.
Let $H = L^2(P)$ where $P$ is the probability distribution on $\X$ that we care about, and let $\varphi_1,\dotsc,\varphi_N$ be orthonormal functions in $H$.
If a kernel method can guarantee expected mean squared error at most $\epsilon$ for every $\varphi_i$, then by \Cref{thm:dimension}, the sample size $n$ must be at least $(1-\epsilon)N$.

Note that the argument above holds as long as the subspace does not depend on the target function to be approximated.
A typical approach is to obtain $\bx_1,\dotsc,\bx_n$ as an iid sample from $P$, in which case a kernel method chooses a function from a (random) subspace $\bW$ defined to be the span of the $n$ the functions $x \mapsto K(x,\bx_i)$ for $i=1,\dotsc,n$.
The choice of the function within $\bW$ is typically guided by the labels $y_1,\dotsc,y_n$, and the labels may depend on the target function to be approximated (e.g., $y_i = \varphi_j(\bx_i)$ for all $i=1,\dotsc,n$ if $\varphi_j$ is the target function).
However, the above argument also applies even if the $\bx_1,\dotsc,\bx_n$ are selected deterministically or in any other way, with the corresponding labels $y_1,\dotsc,y_n$ only being revealed after commiting to these $\bx_i$.
This model of learning is a form of learning with membership queries~\citep{angluin1988queries} where the queries are restricted to be non-adaptive.

\paragraph{Example: learning parity functions.}

As a simple example, take $H = L^2(P)$ where $P$ is the uniform distribution on the discrete hypercube $\cbr{-1,1}^d$, and let $\varphi_1,\dotsc,\varphi_N$ be the $N=2^d$ parity functions (which take values in $\cbr{-1,1}$).
The parity functions form an orthonormal basis for $H$.
\Cref{thm:dimension} implies that every kernel method needs $n \geq (1-\epsilon)2^d$ in order to guarantee expected mean squared error $\epsilon$ against every parity function.
Or, let $\varphi_1,\dotsc,\varphi_N$ be the $N=\binom{d}{k}$ parity functions that are $k$-sparse (i.e., only involve $k$ variables).
Then \Cref{thm:dimension} implies a lower bound of $n \geq (1-\epsilon) \binom{d}{k}$ for the same against $k$-sparse parity functions.
As mentioned above, these lower bounds hold even if the kernel method is granted non-adaptive membership queries.

An interesting aspect of this lower bound for kernel methods was pointed out by \citet{bubeck2020provable}, following the work of \citet{allen2020backward}.
Specifically, there are efficient algorithms (which are not kernel methods) for learning any parity function in the non-adaptive membership query model, that run in $\poly(d,1/\epsilon)$ time and use sample size $n = \poly(d,1/\epsilon)$.
Moreover, the learning guarantee holds even if the labels $y_i$ are corrupted by noise in the manner of the classification noise model of \citet{angluin1988learning}.
Such efficient algorithms are not known in the usual statistical model without membership queries (and are conjectured not to exist).
Also, the class of $k$-sparse parity functions can be learned in $\poly(d,1/\epsilon)$ time and with sample size $n = \poly(k,\log d,1/\epsilon)$ \citep{feldman2007attribute}, again, in the non-adaptive membership query model.
It is not known how to achieve this without using membership queries.
So, the lower bounds we described above (based on \Cref{thm:dimension}) imply that kernel methods are not able to benefit from non-adaptive membership queries in the same way that general polynomial-time learning algorithms are.

\bibliographystyle{plainnat}
\bibliography{refs}

\end{document}